
\typeout{IJCAI-18 Instructions for Authors}


\documentclass{article}
\pdfpagewidth=8.5in
\pdfpageheight=11in
\usepackage{ijcai18}

\usepackage{times}
\usepackage{xcolor}
\usepackage{soul}
\usepackage[utf8]{inputenc}
\usepackage[small]{caption}

\usepackage{epsfig}
\usepackage{lipsum,graphicx,subcaption}
\usepackage{multicol}
\usepackage{amsmath,amsthm,amssymb,amsfonts}
\usepackage{amssymb}
\usepackage{bm}
\usepackage[noend]{algpseudocode}
\usepackage{algorithmicx,algorithm}
\newtheorem{definition}{Definition}
\newtheorem{assumption}{Assumption}

\newtheorem{theorem}{Theorem}  
\captionsetup[subfigure]{labelformat=simple,labelsep=colon}

\begin{document}\sloppy
\def\x{{\mathbf x}}
\def\L{{\cal L}}

\pagestyle{empty}

\title{Information Scaling Law of Deep Neural Networks}

\author{Xiao-Yang Liu}
\maketitle

\begin{abstract}
With the rapid development of Deep Neural Networks (DNNs), various network models that show strong computing power and impressive expressive power are proposed. However, there is no comprehensive informational interpretation of DNNs from the perspective of information theory. Due to the nonlinear function and the uncertain number of layers and neural units used in the DNNs, the network structure shows nonlinearity and complexity. With the typical DNNs named Convolutional Arithmetic Circuits (ConvACs), the complex DNNs can be converted into mathematical formula. Thus, we can use rigorous mathematical theory especially the information theory to analyse the complicated DNNs. In this paper, we propose a novel information scaling law scheme that can interpret the network's inner organization by information theory. First, we show the informational interpretation of the activation function. Secondly, we prove that the information entropy increases when the information is transmitted through the ConvACs. Finally, we propose the information scaling law of ConvACs through making a reasonable assumption.
 
\end{abstract}



\section{Introduction}
\label{sec:intro}

Recently, DNNs have achieved great success \cite{Lecun2015Deep,He2016Deep,ijcai2017-497}, and there are many related works trying to explain DNNs \cite{Setiono1995Understanding,DBLP:conf/ijcai/MoskvinaL16,NIPS2017_7201}.  However, there is no comprehensive informational interpretation of DNNs, which has promising applications in the optimization process and the internal organization of DNNs called \textquotedblleft black boxes\textquotedblright \  \cite{Alain2016Understanding}. Due to the nonlinear function and the uncertain number of layers and neural units used in the DNNs, the network structure shows nonlinearity and complexity. So, it is difficult to propose the interpretation of DNNs from the perspective of information theory \cite{Shwartz2017Opening}.

There are three approaches to solve the above problems. The first approach is the Vapnik-Chervonenkis (VC) dimension \cite{Vapnik2015On}, which represents the largest amount of samples in a dataset divided by a hypothesis space. The VC dimension shows a hypothesis space's powerfulness \cite{Vapnik2000The}. In \cite{Vapnik1994Measuring}, the authors suggest that we should determine the VC dimension by experience, but they also state that the approach described cannot be used in neural networks because they are \textquotedblleft beyond theory\textquotedblright. So far, the VC dimension can only be used to approximate neural networks. The second approach called capacity scaling law \cite{Friedland2017A} predicts the behavior of perceptron networks by calculating the lossless memory (LM) dimension and Mackey (MK) dimension \cite{MacKay2003Information}. However, this approach is based on an ideal neural network and only applies to perceptron networks. The third approach named  Information Bottleneck (IB) theoretical bound is a technique in information theory introduced by \cite{Naftali2000The}. In \cite{Shwartz2017Opening}, the authors demonstrate the effectiveness of the Information Plane visualization of DNNs by the IB theoretical bound. However, the IB bound doesn't have good performance on more complex and deeper networks. So, it is important to propose an interpretation theory that can apply to all DNNs and has more precise rules of measurement. 

To address the above difficulties, we introduce the typical network models of the DNNs named ConvACs \cite{Cohen2016On}. The ConvACs can be viewed as convolutional networks that calculate high-dimensional functions by tensor decompositions \cite{Cohen2017Analysis}. Thus, we can map the ConvACs to concrete mathematical formulas through the decomposited high-dimensional functions. Obviously, it is convenient for us to analyze the mathematical formulas by mathematical method. That is, we can analyze the complex DNNs by mathematical method, especially the information theory, which allows us to understand the complicated DNNs more concretely and provides us designing guidelines of the DNNs.

In this paper, we propose a novel information scaling law that can interpret the network from the perspective of information theory, which provides a better understanding of the expressive efficiency of the DNNs. Firstly, we show the informational interpretation of activation function. Secondly, we prove that the Information entropy increases when the information is transmitted through the ConvACs. Finally, we propose the information scaling law of ConvACs through making a reasonable assumption.

The remainder of this paper is organized as follows. Section 2 provides an introduction of tensor theory and ConvACs. Section 3 describes the informational interpretation of DNNs and demonstrates that the information entropy increases through ConvACs. In Section 4, we propose the information scaling law of ConvACs. We conclude in Section 5.

\section{Preliminaries}

\subsection{Tensor Theory}
 A tensor can be regarded as a multi-dimensional array $\mathcal{A}_{d_1,...,d_N} \in \mathbb{R}$, where $d_i\in[M_i],~i\in[N]$ and $[N]$ denotes the set $\{1,2,...,N\}$. We call the number of indexing entries as modes, which is known as the order of the tensor. The number of values that each mode can take is known as the dimension of the mode. So, the tensor $\mathcal{A} \in \mathbb{R}^{M_1\otimes...\otimes M_N}$ mentioned before is of  order $N$ with dimension $M_i$ in $i$-th mode. In this paper, we consider that $M_1 = ...=M_N=M$, i.e. $\mathcal{A}\in(\mathbb{R}^M)^{\otimes N}$. The tensor product denoted by $\otimes $ is the basic operator in tensor analysis, which is the generalization of outer product of the vectors to tensors. 

The main methods which we use from tensor theory in our paper is tensor decompositions .The rank-1 decomposition is the most common tensor decomposition, which is viewed as a CANDECOMP/PARAFAC (CP) decomposition. Similar to the matrix decoposition, the rank-$Z$ CP decomposition of a tensor ${\mathcal{A}}$ can be represented as:
\begin{small}
\begin{equation}
\begin{split}
{\mathcal{A}} &= \sum_{z=1}^{Z}a_z\bm{\mathrm{a}}^{z,1}\otimes\cdots\otimes\bm{\mathrm{a}}^{z,N,} \\
\Rightarrow\mathcal{A}_{d_1,...,d_N}&=\sum_{z=1}^{Z}a_z\prod_{i=1}^{N}a_{d_i}^{z,i},
\end{split}
\end{equation}
\end{small}
where $\{\bm{\mathrm{a}}^{z,i} \in \mathbb{R}^{M_i}\}_{i=1,z=1}^{N,Z}$ and $\bm{\mathrm{a}} \in \mathbb{R}^Z$ are the parameters. 

Another decomposition is hierarchical and known as the Hierarchical Tucker (HT) decomposition \cite{Hackbusch2014Tensor}. Unlike the CP decomposition combining vectors into higher order tensors with only one step, the HT decomposition follows a tree structure. It combines vectors into matrices, and combines these matrices into 4th ordered tensor and so on recursively in a hierarchically way. Specifically, the recursive formula of the HT decomposition for a tensor $\mathcal{A} \in (\mathbb{R}^M)^{\otimes N}$ is described as follows, where $N = 2^L$,
\begin{small}
 \begin{equation}
 \begin{split}
\phi^{1,j,\gamma} &= \sum_{\alpha = 1}^{r_0}a_{\alpha}^{1,j,\gamma}\bm{\mathrm{a}}^{0,2j-1,\alpha}\otimes\bm{\mathrm{a}}^{0,2j,\alpha}, \\
&\cdots\\
\phi^{l,j,\gamma} &= \sum_{\alpha = 1}^{r_{l-1}}a_{\alpha}^{l,j,\gamma}\underbrace{\phi^{l-1,2j-1,\alpha}}_{\mathrm{order}\, 2^{l-1}}\otimes\underbrace{\phi^{l-1,2j,\alpha}}_{\mathrm{order}\,2^{l-1}}, \\
&\cdots\\
\phi^{L-1,j,\gamma} &= \sum_{\alpha = 1}^{r_{L-2}}a_{\alpha}^{L-1,j,\gamma}\underbrace{\phi^{L-2,2j-1,\alpha}}_{\mathrm{order}\, \frac{N}{4}}\otimes\underbrace{\phi^{L-2,2j,\alpha}}_{\mathrm{order}\,\frac{N}{4}}, \\
\mathcal{A} &= \sum_{\alpha=1}^{r_{L-1}}a_{\alpha}^L\underbrace{\phi^{L-1,1,\alpha}}_{\mathrm{order}\,\frac{N}{2}}\otimes\underbrace{\phi^{L-1,2,\alpha}}_{\mathrm{order}\,\frac{N}{2}},
 \end{split}
 \end{equation}
 \end{small}
where the parameters of the decomposition are the vectors $\{\mathrm{a}^{l,j,\gamma} \in \mathbb{R}^{r_{l-1}}\}_{l\in \{0,...,L-1\},j \in [{N}/{2^l}], \gamma \in [r_l]}$ and the top level vector $\mathrm{a}^L \in \mathbb{R}^{r_{L-1}}$, and the scalars $r_0,...,r_{L-1} \in \mathbb{N}$ are referred to as the ranks of the decomposition. Similar to the CP decomposition, any tensor can be converted to an HT decomposition by only a polynomial increase in the number of parameters.
\subsection{Convolutional Arithmetic Circuits}

In 2016, Cohen and Sharir raised a family of models called Convolutional Arithmetic Circuits (ConvACs) \cite{Cohen2016On}. The ConvACs are convolutional networks which have a particular choice of non-linearities. They set point-wise activations to be linear (as opposed to sigmoid), and the pooling operators are based on product (as opposed to max or average). In fact, The ConvACs is related to many mathematical fields (tensor analysis, measure theory, functional analysis, theoretical physics, graph theory and more), which makes them especially amendable to theoretical analysis. 
\begin{figure}[t]
	\centering
	\includegraphics[totalheight=4.cm]{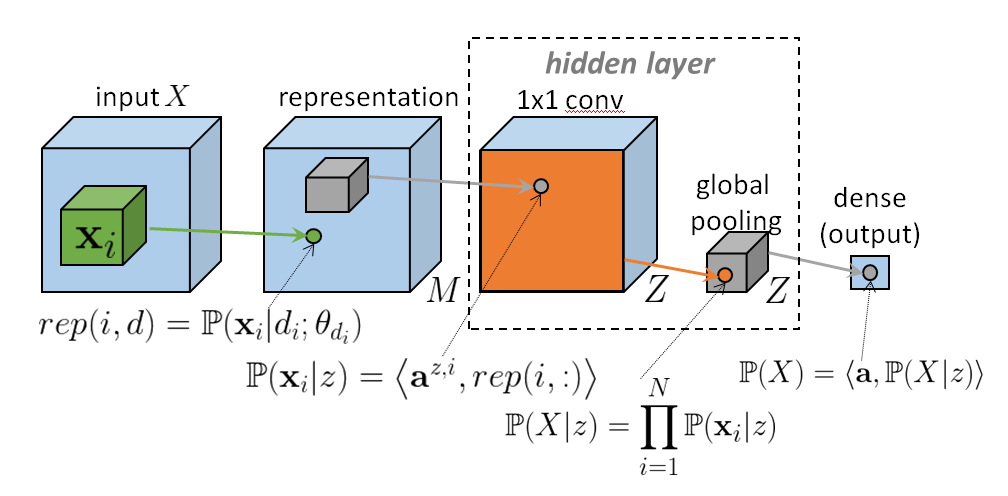}
	\caption{The CP model carried out by a shallow Convolutional Arithmetic Circuits.}
	\label{fig1}
	\vspace{-0.17in}
\end{figure}
In \cite{Cohen2016Deep}, Cohen represents the input signal $X$ by a sequence of low-dimensional local structures $X = (\bm{\mathrm{x}}_1,...,\bm{\mathrm{x}}_N) \in (\mathbb{R}^s)^N$. $X$ is typically considered as an image, where each local structure $\bm{\mathrm{x}}_i$ corresponds to a local patch from that image. Mixture models is one of the simplest forms of ConvACs. The probability distribution of mixture models is defined by the convex combination of $M$ mixing components $\{\mathbb{P}(\bm{\mathrm{x}}|d; \theta_d)\}_{d=1}^{M}$ (e.g. Normal distributions): $\mathbb{P}(\bm{\mathrm{x}}) = \sum_{d=1}^M\mathbb{P}(d)\mathbb{P}(\bm{\mathrm{x}}|d;\theta_d)$. It is easy to learn mixture models, and many of them can be used to approximate any probability distribution if given enough numbers of components, which makes them suitable for various tasks.

Formally, for all $i\in[N]$ there exists $d_i\in[M]$ such that $\bm{\mathrm{x}}_i \sim \mathbb{P}(\bm{\mathrm{x}}|d_i;\theta_{d_i})$, where $d_i$ is a hidden variable presenting the matching component of the $i$-th local structure. So, the probability density of sampling $X$ is described by: 
\begin{equation}
\mathbb{P}(X) = \sum_{d_1,...,d_N = 1}^{M} \mathbb{P}(d_1,...,d_N)\prod_{i = 1}^{N}\mathbb{P}(\bm{\mathrm{x}}_i|d_i;\theta_{d_i}), \label{xxx}
\end{equation} where $\mathbb{P}(d_1,...,d_N)$ represents the prior probability of assigning components $d_1,...,d_N$ to their rspective local structures $\bm{\mathrm{x}}_1,...,\bm{\mathrm{x}}_N$. As with typical mixture models, any probability density function $\mathbb{P}(X)$ could be approximated arbitrarily well by (\ref{xxx}) as $M \to \infty$. The prior probability $ \mathbb{P}(d_1,...,d_N)$ can be represented by a tensor $\mathcal{A}_{d_1,...,d_N}$, which is of order N with dimension M in each mode.

\section{Information Analysis of ConvACs}
We analyze the ConvACs by the information theory. The analysis include the activation function and network's structure. With the tensor algebra, the ConvACs can be used to represent both shallow and deep networks. 
\subsection{Information Theory}

 We will need the following three definitions to calculate the entropy \cite{Jacobs2007Elements}.

\begin{definition}
Conditioning does not increase entropy,
\begin{equation}
H(X|Y) \leq H(X),
\end{equation}
with equality if and only if X and Y are independent of each other. Where $H(X)$ and $H(X|Y)$ are the entropy and conditonal entropy of $X$ and $Y$, respectively.
\end{definition}

\begin{definition}
Independence bound on information entropy. Given $n$ random variables $X_1,X_2,...,X_n$, with a joint distribution $p(x_1,x_2,...,x_n)$, then
\begin{equation}
H(X_1,X_2,...X_n) \leq \sum_{i=1}^n{H(X_i)},
\end{equation}
with equality if and only if $X_i$ are independent of each other. Where  $H(X_i)$ and $H(X_1,X_2,...X_n)$ are the entropy and joint entropy of $X_i$ and $(X_1,X_2,...X_n)$, respectively.
\end{definition}

\begin{definition}
Mutual Information.
Given any two random variables, $X$ and $Y$, with a joint distribution $p(x,y)$, their Mutual Information is defined as:
\begin{equation}
\begin{split}
I(X;Y) 
&= \sum_{x \in X,y \in Y}p(x,y)\log(\frac{p(x,y)}{p(x)p(y)}) \\
&= H(X) - H(X|Y),
\end{split}
\end{equation}
where 
$H(X)$ and $H(X|Y)$ are the entropy and conditonal entropy of $X$ and $Y$, respectively.
\end{definition}

\subsection{Activation Function Analysis}

First, we analyze how activation function affects information entropy. We come up with two theorems by analyzing the sigmoid function and the rectified linear unit (ReLU) function \cite{He2015Delving}.
\begin{theorem}
Sigmoid function reduces the information entropy.
Given the probability density function $f_x(x)$ of $X$, the sigmoid function $y = \frac{1}{1+e^{-x}}$, we can get:
\begin{equation}
H(Y) < H(X),
\end{equation}
where $H(X)$, $H(Y)$ are the information entropy of $X$ and $Y$, respectively.
\end{theorem}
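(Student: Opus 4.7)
The plan is to treat $Y=\sigma(X)$ as a smooth, strictly monotone change of variable and use the standard transformation rule for differential entropy. Since the sigmoid $g(x)=1/(1+e^{-x})$ is $C^\infty$ and strictly increasing from $\mathbb{R}$ onto $(0,1)$, the density of $Y$ can be written via the inverse function as $f_Y(y)=f_X(g^{-1}(y))/g'(g^{-1}(y))$, which lets one move the calculation back to the $X$-integral.

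First I would write $H(Y)=-\int_0^1 f_Y(y)\log f_Y(y)\,dy$, substitute $y=g(x)$, $dy=g'(x)\,dx$, and simplify. The cross cancellation between the Jacobian factor in $f_Y$ and the $dy=g'(x)\,dx$ gives
\begin{equation}
H(Y)=H(X)+\int_{-\infty}^{\infty}f_X(x)\log g'(x)\,dx=H(X)+\mathbb{E}\bigl[\log g'(X)\bigr].
\end{equation}
So the theorem reduces to showing the correction term $\mathbb{E}[\log g'(X)]$ is strictly negative.

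Next I would exploit the well-known identity $g'(x)=\sigma(x)(1-\sigma(x))$. Because $\sigma(x)\in(0,1)$ for every real $x$, the AM--GM inequality gives $\sigma(x)(1-\sigma(x))\le 1/4$, with equality only at the single point $x=0$. Hence $\log g'(x)\le\log(1/4)<0$ pointwise, and the inequality is strict on a set of positive Lebesgue measure; since $f_X$ is a probability density (so assigns positive mass to some neighborhood where $\log g'(x)<\log(1/4)$ strictly, or more simply to any set where $g'(x)<1/4$), we get $\mathbb{E}[\log g'(X)]<0$, and therefore $H(Y)<H(X)$.

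The only real subtlety is regularity: to make $\mathbb{E}[\log g'(X)]$ well defined one needs $\log g'(X)=\log\sigma(X)+\log(1-\sigma(X))$ to be integrable against $f_X$, and for the change-of-variables manipulation one wants $H(X)$ finite so that the entropy equation makes sense as a genuine equality rather than an $-\infty$ artifact. I would handle this by implicitly assuming, as is standard for differential-entropy statements of this flavor, that $f_X$ has finite differential entropy and finite log-moments of $\sigma(X)$ and $1-\sigma(X)$; under these mild conditions the above argument is rigorous, and the strict bound $g'(x)<1$ (with equality never attained) is what forces the strict inequality $H(Y)<H(X)$ claimed in the theorem.
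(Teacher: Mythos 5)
Your proof is correct and follows essentially the same route as the paper: both apply the change-of-variables formula for differential entropy to obtain $H(Y)=H(X)+\mathbb{E}[\log g'(X)]$ (the paper writes the correction term as $-\int f_x(x)\log\frac{(1+e^{-x})^2}{e^{-x}}\,dx$, which equals your $\mathbb{E}[\log g'(X)]$ since $g'(x)=e^{-x}/(1+e^{-x})^2$) and then argue this term is strictly negative. Your version is actually slightly more complete, since you justify the negativity via $g'(x)=\sigma(x)(1-\sigma(x))\le 1/4$ and note the regularity assumptions, whereas the paper simply asserts $C<0$.
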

\begin{proof}
By the \emph{probability Theory}, we can get the probability density function $f_y(y)$ of $Y$
\begin{equation}
f_y(y) = f_x(h(y))|h'(y)|,
\end{equation}
where $h(y) = \ln(\frac{y}{1-y})$ is the inverse function. Then, the entorpy of $Y$:
\begin{equation}
\begin{split}
H(Y)\! &=\! - \int_yf_x(y)\log(f_y(y))dy \\
\!&=\! -\int_y|h'(y)|f_x(h(y))\log(|h'(y)|f_x(h(y))dy \\
\!&=\! -\int_xf_x(x)\log f_x(x)d_x\! -\!\int_xf_x(x)\log\frac{(1\!+\!e^{-x})^2}{e^{-x}}dx \\
\!&=\! H(X) + C \\
\!&<\! H(X), \nonumber
\end{split}
\end{equation}
where $C\! =\! -\int_xf_x(x)\log\frac{(1+e^{-x})^2}{e^{-x}}dx < 0$, and $H(X)$, $H(Y)$ are the information entropy of $X$ and $Y$, respectively.
\end{proof}

\begin{theorem}
ReLU function doesn't change the information entropy when used in probability models. Given the probability density function $f_X(x)$ of $X$, the ReLU function $y = max\{0,x\}$, we can get:
\begin{equation}
H(Y) = H(X),
\end{equation}
where $H(X)$, $H(Y)$ are the information entropy of $X$ and $Y$, respectively.
\end{theorem}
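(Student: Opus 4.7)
The plan is to exploit the qualifier \emph{when used in probability models}. In the ConvAC setting described in Section~2, the scalar $X$ that is fed into the pointwise nonlinearity is a probability (or, more generally, a value of a probability density), hence $X\ge 0$ almost surely. On $[0,\infty)$ the ReLU map $y=\max\{0,x\}$ reduces to the identity, so it acts as an identity on the support of $X$. This collapses the question of entropy change to a trivial change of variables.

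Concretely, I would first invoke the assumption that $X$ takes values in $[0,\infty)$ with probability one, so that $\mathbb{P}(X<0)=0$ and consequently $\int_{-\infty}^{0} f_X(x)\,dx = 0$. Then I would write
\begin{equation}
H(X) = -\int_{-\infty}^{\infty} f_X(x)\log f_X(x)\,dx = -\int_{0}^{\infty} f_X(x)\log f_X(x)\,dx,
\end{equation}
using the convention $0\log 0 = 0$ on the negative half-line. Next, since $Y=\max\{0,X\}=X$ almost surely, the pushforward density satisfies $f_Y(y)=f_X(y)$ for $y\ge 0$, and $\mathbb{P}(Y<0)=0$. Substituting this into the definition of the differential entropy of $Y$ yields
\begin{equation}
H(Y) = -\int_{0}^{\infty} f_Y(y)\log f_Y(y)\,dy = -\int_{0}^{\infty} f_X(y)\log f_X(y)\,dy = H(X).
\end{equation}

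The main obstacle, and really the only subtle point, is justifying the restriction to non-negative inputs: without it the claim is false, because ReLU dumps all mass from $(-\infty,0)$ onto the atom $\{0\}$, mixing a continuous part with a discrete part and strictly reducing differential entropy. I would therefore make the non-negativity assumption explicit up front, tying it back to the mixture-model/tensor formulation in equation~(\ref{xxx}) where the signals propagated through the network are probabilities $\mathbb{P}(\bm{\mathrm{x}}_i\mid d_i;\theta_{d_i})\ge 0$. Once that framing is in place, the transformation is a measure-preserving identity on the support, and the equality $H(Y)=H(X)$ follows immediately without any Jacobian calculation of the kind used in the Sigmoid proof.
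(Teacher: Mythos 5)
Your proposal is correct and follows essentially the same route as the paper's own proof: both arguments rest on the observation that in a probability model the input $X$ is nonnegative almost surely, so ReLU reduces to the identity on the support of $X$ and the entropy is unchanged. You simply spell out the integral bookkeeping and the (important) caveat about negative mass more explicitly than the paper's one-line justification does.
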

\begin{proof}
 Our models are probability models, the $X$ must be nonnegative number. So the ReLU function could be turn into $y\! =\! x$, which is linear and doesn't change the information entropy. 
 \end{proof}
 \begin{figure*}[t]
 	\centering
 	\includegraphics[totalheight= 6cm]{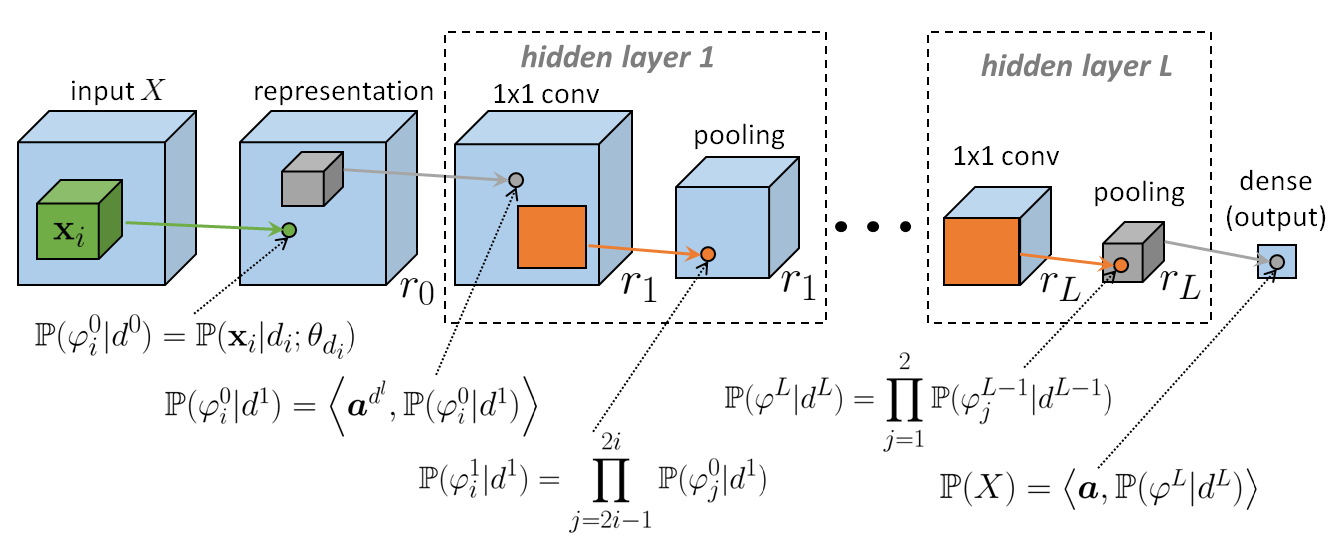}
 	\caption{The HT-model carried out by ConvACs.}
 	\label{fig5}
 	\vspace{-0.17in}
 \end{figure*}

\subsection{CP-Model Analysis}

Secondly, by representing the priors tensor according to the CP decomposition, we can get the following equation called CP-model \cite{Sharir2017Tractable}:
\begin{equation}
\begin{split}
\mathbb{P}(X) &= \sum_{d_1,...,d_N}^M(\sum_{z=1}^Z{a_z}\prod_{i=1}^N{a_{d_i}^{z,i}})\prod_{i=1}^{N}\mathbb{P}(\bm{\mathrm{x}}_i|d_i;\theta_{d_i}) \\ 
&= \sum_{z=1}^{Z}{a_z}\prod_{i=1}^N\sum_{d_i=1}^M{a_{d_i}^{z,i}\mathbb{P}(\bm{\mathrm{x}}_i|d_i;\theta_{d_i})}.     \label{4}
\end{split}
\end{equation}
As shown in Fig. \ref{fig1}, the CP-model is a shallow network. The first layer is a representation layer followed by a $1\! \times\! 1 $ convolutional layer, and then a global pooling layer, finally is the output layer. 
For CP-model, (\ref{4}) can be decomposed into three equations: 
\begin{equation}
\mathbb{P}(X)  =  \sum_{z=1}^Z{a_z}\mathbb{P}(X|z), \label{1}
\end{equation}
\begin{equation}
\mathbb{P}(X|z)  =  \prod_{i=1}^N\mathbb{P}(\bm{\mathrm{x}}_i|z), \label{2}
\end{equation}
\begin{equation}
\mathbb{P}(\bm{\mathrm{x}}_i|z)  =  \sum_{d_i=1}^M{a_{d_i}^{z,i}\mathbb{P}(\bm{\mathrm{x}}_i|d_i;\theta_{d_i})}.\label{3}
\end{equation}
That is to say, the CP-model is composed of two mixture models (\ref{1})(\ref{3}) and a continued product (\ref{2}). 
At first, we introduce a definition about the continued product.
\begin{definition}
The continued product doesn't change information entropy. i.e. for following equation:
\begin{equation}
\mathbb{P}(X) = \prod_{i=1}^{N}\mathbb{P}(\bm{\mathrm{x}}_i),
\end{equation}
the relationship of information entropy is:
\begin{equation}
H(X) = \sum_{i=1}^{N}H(\bm{\mathrm{x}}_i),
\end{equation}
where $X\! =\! (\bm{\mathrm{x}}_1,...,\bm{\mathrm{x}}_N)$, $H(X)$, $H(\bm{\mathrm{x}}_i)$ are the information entropy of $X$ and $\bm{\mathrm{x}}_i$, respectively.
\end{definition}
According to Definition 1, Definition 4 and Theorem 3 (see Section 4), (\ref{1})(\ref{2})(\ref{3}) can be transformed into the following presentations of information entropy:
 \begin{equation}
 H(X) \geq H(X|z),
 \end{equation}
 \begin{equation}
 H(X|z) = \sum_{i=1}^N{H(\bm{\mathrm{x}}_i|z)},
 \end{equation}
 \begin{equation}
 H(\bm{\mathrm{x}}_i|z) \geq H(\bm{\mathrm{x}}_i|d).
 \end{equation}
 By combining above three formulas, we can get the total impact of CP-model on information entropy:
 \begin{equation}
 H(X) \geq H(X|z) = \sum_{i=1}^N{H(\bm{\mathrm{x}}_i|z)} \geq \sum_{i=1}^N{H(\bm{\mathrm{x}}_i|d)}. \label{49}
 \end{equation} 

\subsection{HT-Model Analysis}
Finally, we analyze the CP-model, which is a shallow network. By the HT decomposition (3), we can get the deep network called HT-model shown in Fig. \ref{fig5} \cite{Sharir2017Tractable}. And the HT-model can be presented by following equation:
 \begin{equation}
 \begin{split}
 \mathbb{P}(\varphi_i^0|d^0)  &=  \mathbb{P}(\bm{\mathrm{x}}_i|d_i;\theta_{d_i}), \\
  \mathbb{P}(\varphi_i^1|d^1)  &=  \prod_{j=2i-1}^{2i}\sum_{\alpha=1}^{r_0}a_{\alpha}^{d^{1}}\mathbb{P}(\varphi_{j}^{0}|d^{0}), \\
 &\cdots \\
  \mathbb{P}(\varphi_i^l|d^l)  &=  \prod_{j=2i-1}^{2i}\sum_{\alpha=1}^{r_{l-1}}a_{\alpha}^{d^{1}}\mathbb{P}(\varphi_{j}^{l-1}|d^{l-1}), \\
   &\cdots\\
  \mathbb{P}(\varphi^L|d^L)  &=  \prod_{j=1}^{2}\sum_{\alpha=1}^{r_{L-1}}a_{\alpha}^{d^{L}}\mathbb{P}(\varphi_{j}^{L-1}|d^{L-1}), \\  
  \mathbb{P}(X)  &= \sum_{\alpha=1}^{r_{L}}a_{\alpha}\mathbb{P}(\varphi^{L}|d^{L}), 
 \end{split}
 \end{equation}
where $d^l \in [r_{l}]$,$r_l \in \mathbb{N}, l\in\{0,...,L\}, L=\log_2N$, $\mathbb{P}(\varphi_i^l|d^l) $ presents the $i$-th component in the $l$-th layer.
 
Each layer in the HT-model can be seen as a CP-model ($N=2$). We can get following equations by the conclusion in last subsection:
\begin{equation}
\begin{split}
H(\varphi_i^0|d^0) &= H(\bm{\mathrm{x}}_i|d_i),\\
H(\varphi_i^{1}|d^1) &\geq H(\varphi_{2i-1}^{0}|d^0) + H(\varphi_{2i}^{0}|d^0), \\
&\cdots\\
H(\varphi_i^{l}|d^l) &\geq H(\varphi_{2i-1}^{l-1}|d^0) + H(\varphi_{2i}^{l-1}|d^0),\\
&\cdots\\
H(\varphi^{L}|d^L) &\geq H(\varphi_{1}^{L-1}|d^{L-1}) + H(\varphi_{2}^{L-1}|d^{L-1}),\\
H(X) &\geq H(\varphi^{L}|d^L),
\end{split}
\end{equation}
where $H(\varphi_i^{l}|d^l)$ represents the entropy of the each channel ($d^l$) of each layer ($l$), ${l\in \{0,...,L\},i \in [{N}/{2^l}], d^l \in [r_l]}$. We naturally derive the following equation:
\begin{equation}
H(X) \geq \sum_{i=1}^{N}H(\bm{\mathrm{x}}_i|d). \label{50}
\end{equation}
Combining (\ref{49}) and (\ref{50}), we prove that the information entropy increases when information is transmitted through the ConvACs. In next section, we will focus on the concrete information scaling law.

\section{Information Scaling Law of ConvACs}

Now, we prove that the loss of information exists in the ConvACs. In this section, we will propose the information scaling law of the ConvACs. 

For the graphical description of HT-model shown in Fig. 3, the information loss could be analysized as followling. We call the operation (the dotted box shown in Fig. \ref{fig2}) as a fusion. A fusion can be presented as follows.
\begin{equation}
\mathbb{P}(\varphi_i^l|d^l) = \mathbb{P}(\varphi_{2i-1}^{l-1}|d^{l}) \cdot \mathbb{P}(\varphi_{2i}^{l-1}|d^{l}), \label{9}
\end{equation}
\begin{equation}
\mathbb{P}(\varphi_{2i-1}^{l-1}|d^{l}) = \sum_{\alpha=1}^{r_{l-1}}a_{\alpha}^{d^{l}}\mathbb{P}(\varphi_{2i-1}^{l-1}|d^{l-1}), \label{10}
\end{equation}
\begin{equation}
\mathbb{P}(\varphi_{2i}^{l-1}|d^{l}) = \sum_{\alpha=1}^{r_{l-1}}a_{\alpha}^{d^{l}}\mathbb{P}(\varphi_{2i}^{l-1}|d^{l-1}),  \label{8}
\end{equation}
where $d^l \in [r_{l}]$,$r_l \in \mathbb{N}$, $\mathbb{P}(\varphi_i^l|d^l)$ presents the $i$-th component in the $l$-th layer.
 \begin{figure}[t]
 	\centering
 	\includegraphics[totalheight=5cm]{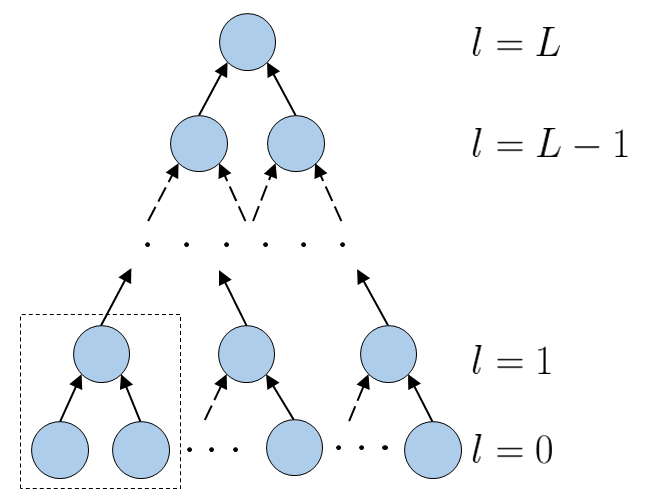}
 	\caption{Graphical description of HT-model. The $i$-th node in $l$-th layer presents $\mathbb{P}(\varphi_i^l|d^l)$.}
 	\label{fig2}
 	\vspace{-0.17in}
 \end{figure}

 \begin{theorem}
 The operation we call a mapping of probability space is shown in Fig. \ref{fig4}, the relationship of information entropy between $\varphi_{2i-1}^{l-1}|d^{l} $ and $\varphi_{2i-1}^{l-1}|d^{l-1} $ can be described by following fomular:\label{100}
 \begin{equation}
H(\varphi_{2i-1}^{l-1}|d^{l}) \geq  H(\varphi_{2i-1}^{l-1}|d^{l-1}). \label{14}
 \end{equation}
 \end{theorem}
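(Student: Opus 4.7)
The plan is to recognize equation (10) as expressing $\mathbb{P}(\varphi_{2i-1}^{l-1}|d^l)$ as a mixture, indexed by $\alpha \in [r_{l-1}]$, of the component distributions $\mathbb{P}(\varphi_{2i-1}^{l-1}|d^{l-1})$. Once the coefficients $a_\alpha^{d^l}$ are reinterpreted as the conditional probabilities $\mathbb{P}(d^{l-1}=\alpha \mid d^l)$, the statement reduces to Definition 1 (conditioning does not increase entropy) applied inside the conditional world where $d^l$ is fixed.

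First, I would make the probabilistic interpretation explicit: $a_\alpha^{d^l}\ge 0$ and $\sum_{\alpha=1}^{r_{l-1}}a_\alpha^{d^l}=1$, so that the identity in (10) genuinely reads $\mathbb{P}(\varphi_{2i-1}^{l-1}|d^l)=\sum_{d^{l-1}}\mathbb{P}(d^{l-1}|d^l)\,\mathbb{P}(\varphi_{2i-1}^{l-1}|d^{l-1})$. This step also embeds the Markov structure of the HT decomposition: given $d^{l-1}$, the value of $\varphi_{2i-1}^{l-1}$ is independent of $d^l$, since $d^l$ influences $\varphi_{2i-1}^{l-1}$ only through its mixing over $d^{l-1}$. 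Hence $H(\varphi_{2i-1}^{l-1} \mid d^{l-1}, d^l)=H(\varphi_{2i-1}^{l-1} \mid d^{l-1})$.

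Next, I would invoke Definition 1 in its conditional form: for any random variables $U,V,W$, one has $H(U|W)\ge H(U|W,V)$ because conditioning on an additional variable cannot increase entropy. Taking $U=\varphi_{2i-1}^{l-1}$, $V=d^{l-1}$, and $W=d^l$ yields
\begin{equation}
H(\varphi_{2i-1}^{l-1} \mid d^l) \;\ge\; H(\varphi_{2i-1}^{l-1} \mid d^l, d^{l-1}) \;=\; H(\varphi_{2i-1}^{l-1} \mid d^{l-1}),
\end{equation}
where the equality uses the Markov property established above. This gives exactly (\ref{14}).

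The only delicate point, and the place I expect to spend the most care, is the justification that the $a_\alpha^{d^l}$ can be treated as a legitimate conditional probability mass function over $d^{l-1}$ for each fixed $d^l$; without that, the entropy inequality for mixtures cannot be directly invoked. If the paper's convention does not automatically enforce nonnegativity and normalization, I would either note this as an implicit assumption consistent with the probabilistic HT-model setup of Section 2, or alternatively bypass the conditional-probability interpretation entirely by appealing to the concavity of the entropy functional on the simplex of densities, which yields the same inequality directly from (10).
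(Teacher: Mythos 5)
Your proposal is correct and follows essentially the same route as the paper: the paper likewise reads equation (\ref{10}) as a mixture, applies Definition~1 to get $H(\varphi_{2i-1}^{l-1}|d^{l}\!=\!j) \geq H(\varphi_{2i-1}^{l-1}|d^{l-1})$ for each $j$, and then averages over $j$ with weights $\mathbb{P}(d^l\!=\!j)$ — which is just your chain $H(\varphi_{2i-1}^{l-1}|d^l)\ge H(\varphi_{2i-1}^{l-1}|d^l,d^{l-1})=H(\varphi_{2i-1}^{l-1}|d^{l-1})$ written pointwise. Your explicit attention to the normalization of the $a_\alpha^{d^l}$ and to the Markov property is a genuine improvement in rigor, since the paper's proof invokes Definition~1 on the mixture without verifying either condition.
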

 \begin{proof}
  As shown in Fig. \ref{fig4}, we have further refined the dotted box shown in Fig. \ref{fig3}, i.e. the (\ref{10}). The operation can be wrote as the following formula:
 \begin{equation}
 \bm{\mathrm{u}} = \bm{\mathrm{A}} \cdot \bm{\mathrm{v}}, \label{11}
 \end{equation}
 where $$\bm{\mathrm{u}}\! =\! 
 \begin{pmatrix}
 \mathbb{P}(\varphi_{2i-1}^{l-1}|{d^{l} = 1})\\
 \mathbb{P}(\varphi_{2i-1}^{l-1}|{d^{l} = 2})\\
 ...\\
 \mathbb{P}(\varphi_{2i-1}^{l-1}|{d^{l} = r_l)}\\
 \end{pmatrix},
 \bm{\mathrm{v}}\! =\!
 \begin{pmatrix}
  \mathbb{P}(\varphi_{2i-1}^{l-1}|{d^{l-1} = 1})\\
  \mathbb{P}(\varphi_{2i-1}^{l-1}|{d^{l-1} = 2})\\
  ...\\
  \mathbb{P}(\varphi_{2i-1}^{l-1}|{d^{l-1} = r_{l-1})}\\
  \end{pmatrix}$$

  \noindent$\bm{\mathrm{A}} = [\bm{a}^{d^l=1} \ \bm{a}^{d^l=2} \ ... \  \bm{a}^{d^l=r_l}]^T \in \mathbb{R}^{r_l \times r_{l-1}}$ and $\bm{a}^{d^l} \in \mathbb{R}^{r_{l-1}}$. 
 From (\ref{11}), the following fomular is obvious:
 \begin{equation}
 \begin{split}
 \mathbb{P}(\varphi_{2i-1}^{l-1}|d^{l}\!=\!j) &= \bm{a}^{d^l=j} \cdot \bm{\mathrm{v}}\\
 &= \sum_{\alpha=1}^{r_{l-1}}a_{\alpha}^{d^{l}=j}\mathbb{P}(\varphi_{2i-1}^{l-1}|d^{l-1}\!=\!\alpha). \label{12}
 \end{split}
 \end{equation}
 By the Definition 1, from (\ref{12}) we can get 
  \begin{equation}
 H(\varphi_{2i-1}^{l-1}|d^{l}\!=\!j) \geq H(\varphi_{2i-1}^{l-1}|d^{l-1}),
 \end{equation}
 and by the definition of information entropy:
 \begin{equation}
 \begin{split}
 H(\varphi_{2i-1}^{l-1}|d^{l}) &=\sum_{j=1}^{r_l}\mathbb{P}(d^{l}\!=\!j)H(\varphi_{2i-1}^{l-1}|d^{l}\!=\!j)\\
 &\geq \sum_{j=1}^{r_l}\mathbb{P}(d^{l}\!=\!j)H(\varphi_{2i-1}^{l-1}|d^{l-1}) \\
 &= H(\varphi_{2i-1}^{l-1}|d^{l-1}).
 \end{split}
 \end{equation}
 \end{proof}
  \begin{figure}[t]
  	\centering
  	\includegraphics[totalheight=5cm]{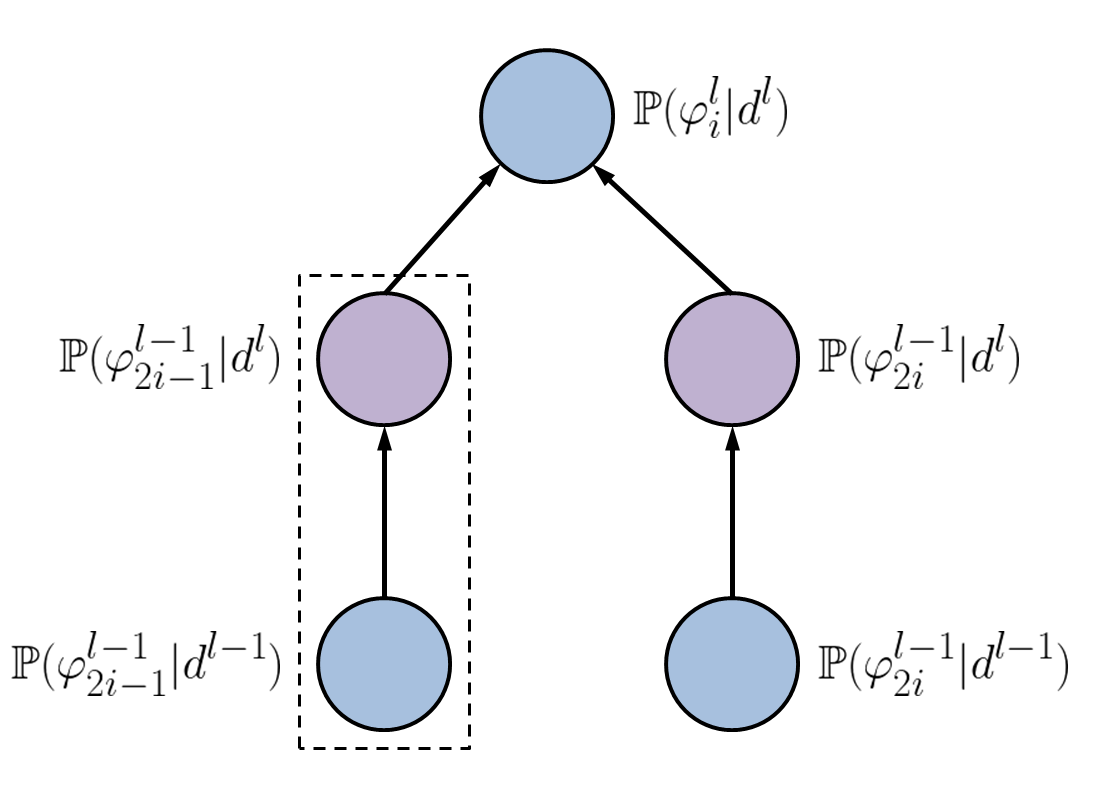}
  	\caption{The refined structure of a fusion.}
  	\label{fig3}
  	\vspace{-0.17in}
  \end{figure}
 
 If matrix $\bm{\mathrm{A}}$ is invertible, we can get: 
 $$\bm{\mathrm{v}} = \bm{\mathrm{A}}^{-1} \cdot \bm{\mathrm{u}},$$
 and by the Definition \ref{100}, we can get:
 \begin{equation}
 H(\varphi_{2i-1}^{l-1}|d^{l-1}) \geq H(\varphi_{2i-1}^{l-1}|d^{l}),  \label{15}
 \end{equation}
combine (\ref{14}) and (\ref{15}):
 \begin{equation}
H(\varphi_{2i-1}^{l-1}|d^{l}) = H(\varphi_{2i-1}^{l-1}|d^{l-1}) ,
 \end{equation}
 i.e. if matrix $\bm{\mathrm{A}}$ is invertible, there is no information loss. So we can propose a reasonable assumption:

 \begin{assumption}
 For (\ref{14}), when $\bm{\mathrm{A}}$ is  a singular matrix, there exists maximum information loss. i.e. the uncertainty of $\bm{\mathrm{u}}$ is larger than $\bm{\mathrm{v}}$. And the information entropy increases with the increase of uncertainty. So we can get the following fomular:
\begin{equation}
H(u) - H(v) \leq C, \label{20}
\end{equation}
and another fomular for the same meaning:
\begin{equation}
\frac{H(u)}{H(v)} \leq \beta, \label{21}
\end{equation}
where $H(u)$, $H(v)$ is the information entropy of $u$ and $v$, respectively. And $C$ presents the maximum information entropy increase, and $\beta\!>\!1$ presents the maximum gain ratio of information entropy.
 \end{assumption}
  \begin{figure}[t]
 	\centering
 	\includegraphics[totalheight=5.0cm]{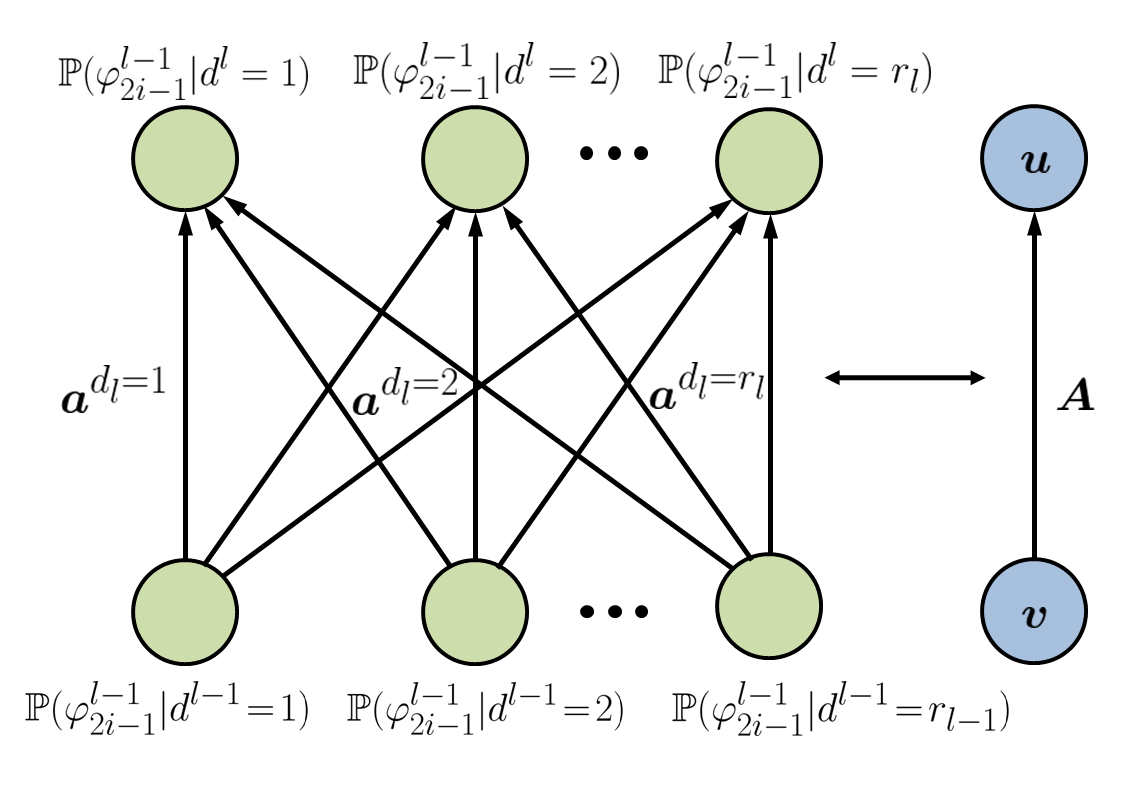}
 	\caption{The more refined structure of the dotted box shown in Fig. \ref{fig3}.}
 	\label{fig4}
 	\vspace{-0.17in}
 \end{figure}
 We now present that the information scaling law of the ConvACs.
 \begin{theorem}
Information scaling law of HT-model:
\begin{equation}
H(X) - \sum_{i=1}^{N}H(\bm{\mathrm{x}}_i|d) \leq (2^L +2^{L-1} +...+ 1)C,  \label{18}
\end{equation}
\begin{equation}
\frac{H(X)}{\sum_{i=1}^{N}H(\bm{\mathrm{x}}_i|d)} \leq \beta^{L+1},
\end{equation}
where $H(X)$ is the information entropy of $X$, and $H(\bm{\mathrm{x}}_i|d)$ is the information entropy of local strcture $\bm{\mathrm{x}}_i$. $C$ is the maximum information entropy increase of a mapping of probability space, and $\beta$ is the maximum gain ratio of information entropy of a mapping of probability space.
 \end{theorem}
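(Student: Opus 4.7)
The plan is to propagate the per-operation bounds supplied by Assumption~1 up through the HT-model tree, one layer at a time, and then sum (resp.\ multiply) them to obtain the two cumulative estimates. The key observation is that every fusion consists of exactly two probability-space mappings followed by one product, so we can control its contribution to the entropy using Assumption~1 and Definition~4.

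First I would isolate the effect of a single fusion. By Definition~4 the product in (\ref{9}) gives the exact identity $H(\varphi_i^l|d^l) = H(\varphi_{2i-1}^{l-1}|d^l) + H(\varphi_{2i}^{l-1}|d^l)$, while the two mappings (\ref{10}) and (\ref{8}) are each an instance of $\bm{\mathrm{u}} = \bm{\mathrm{A}}\cdot\bm{\mathrm{v}}$ and therefore satisfy, by Assumption~1,
$$H(\varphi_{2i-1}^{l-1}|d^l) \leq H(\varphi_{2i-1}^{l-1}|d^{l-1}) + C, \qquad H(\varphi_{2i}^{l-1}|d^l) \leq H(\varphi_{2i}^{l-1}|d^{l-1}) + C,$$
with the multiplicative counterparts using the factor $\beta$. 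Adding the two mapping bounds to the product identity yields a per-node inequality in which the parent entropy exceeds the sum of child entropies by at most $2C$ additively, or by a factor of at most $\beta$ multiplicatively after factoring.

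Next I would aggregate over the layer. Since layer $l$ contains $N/2^l = 2^{L-l}$ nodes, summing the per-node inequality over $i$ gives
$$\sum_{i=1}^{N/2^l} H(\varphi_i^l|d^l) \leq \sum_{j=1}^{N/2^{l-1}} H(\varphi_j^{l-1}|d^{l-1}) + 2^{L-l+1} C,$$
and a simple induction from the base case $H(\varphi_i^0|d^0) = H(\bm{\mathrm{x}}_i|d)$ up to $l=L$ telescopes to
$$H(\varphi^L|d^L) \leq \sum_{i=1}^{N} H(\bm{\mathrm{x}}_i|d) + \sum_{l=1}^{L} 2^{L-l+1} C.$$
The top-level step $\mathbb{P}(X) = \sum_{\alpha=1}^{r_L} a_\alpha \mathbb{P}(\varphi^L|d^L)$ is itself a degenerate mapping (matrix $\bm{\mathrm{A}}$ collapses to a single row), so one more application of Assumption~1 adds a final $C$ (resp.\ factor $\beta$). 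Collecting the geometric sum, $\sum_{l=1}^{L} 2^{L-l+1} + 1 = 2^{L} + 2^{L-1} + \cdots + 2 + 1$, delivers (\ref{18}), while chaining the multiplicative bound $L$ times through the tree and once more at the top yields $\beta^{L+1}$, as required.

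The main obstacle is the bookkeeping: each fusion contributes $2C$ rather than $C$, but the number of fusions halves at each ascending layer, so one has to verify that the doubling inside a node exactly compensates the halving of node counts to reproduce the geometric series $2^L+2^{L-1}+\cdots+1$ rather than, say, $L \cdot 2^L$. A subsidiary point requiring care is the final transition from $\varphi^L|d^L$ to $X$; because Assumption~1 was phrased for a matrix-vector product, I would explicitly note that the scalar case $\bm{\mathrm{u}} = \bm{a}\cdot\bm{\mathrm{v}}$ arising here is a legitimate special case of that assumption, so no additional principle is needed to close the induction.
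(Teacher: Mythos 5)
Your proposal is correct and follows essentially the same route as the paper: isolate a single fusion as two probability-space mappings (each contributing $C$ or a factor $\beta$ via Assumption~1) plus one entropy-additive product (Definition~4), then aggregate over the $2^{L-l}$ nodes per layer, telescope up the tree, and add one final application of Assumption~1 for the top-level sum, recovering the geometric series $2^L+2^{L-1}+\cdots+2+1$ and the factor $\beta^{L+1}$. Your explicit layer-by-layer induction and the remark that the top-level step is a one-row special case of the mapping are just slightly more careful renderings of the paper's own bookkeeping.
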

\begin{proof}
As mentioned before, a fusion could be descirbed by (\ref{9})(\ref{10})(\ref{8}). By the Assumption 1, from (\ref{10})(\ref{8}), we can get:
\begin{equation}
\begin{split}
H(\varphi_{2i-1}^{l-1}|d^{l}) - H(\varphi_{2i-1}^{l-1}|d^{l-1}) \leq C,\\\label{13}
H(\varphi_{2i}^{l-1}|d^{l}) -  H(\varphi_{2i}^{l-1}|d^{l-1}) \leq C,    \\    
\end{split}
\end{equation}
\begin{equation}
\begin{split}
\frac{H(\varphi_{2i-1}^{l-1}|d^{l})}{H(\varphi_{2i-1}^{l-1}|d^{l-1})} \leq \beta,\\
\frac{H(\varphi_{2i}^{l-1}|d^{l})}{H(\varphi_{2i}^{l-1}|d^{l-1})} \leq \beta.  \label{39}
\end{split}
\end{equation}

From (\ref{9}), by Definition 4, we can get:
\begin{equation}
H(\varphi_i^l|d^l) = H(\varphi_{2i-1}^{l-1}|d^{l}) + H(\varphi_{2i}^{l-1}|d^{l}). \label{16}
\end{equation}
Combining (\ref{39}) and (\ref{16}), we can get the information scaling law of a fusion:
\begin{equation}
H(\varphi_i^l|d^l)\! - \!(H(\varphi_{2i-1}^{l-1}|d^{l-1})\! +\! H(\varphi_{2i}^{l-1}|d^{l-1})) \leq 2C,  \label{17}
\end{equation}
\begin{equation}
\frac{H(\varphi_{i}^{l}|d^{l}))}{H(\varphi_{2i-1}^{l-1}|d^{l-1})+H(\varphi_{2i}^{l-1}|d^{l-1})} \leq \beta. \label{40}
\end{equation}
As shown in Fig. \ref{fig2}, the ConvACs has $L = \log_2N$ layers, and at the $l$-th layer has $2^{L-l+1}$ fusions. So we get :
\begin{equation}
H(\varphi^L|d^L) - \sum_{i=1}^{N}H(\varphi_i^0|d^0) \leq (2^L \!+\!2^{L-1}\! +\!...\!+\!2)C, \label{42}
\end{equation}
\begin{equation}
\frac{H(\varphi^L|d^L)} {\sum_{i=1}^{N}H(\varphi_i^0|d^0)}\leq \beta^{L}. \label{43}
\end{equation}
We get the following equations by $ \mathbb{P}(X) = \sum_{d^L = 1}^{r_L}a_{d^L}P(\varphi^L|d^L)$ and Assumption 1:
\begin{equation}
H(X)-H(\varphi^L|d^L) \leq C, \label{44}
\end{equation}
\begin{equation}
\frac{H(X)}{H(\varphi^L|d^L)} \leq \beta. \label{45}
\end{equation}
Combining (\ref{42})(\ref{44}) and (\ref{43})(\ref{45}), respectively, we can get the information scaling law of HT-model:
\begin{equation}
H(X) - \sum_{i=1}^{N}H(\bm{\mathrm{x}}_i|d) \leq (2^L +2^{L-1} +...+2 + 1)C,
\end{equation}
\begin{equation}
\frac{H(X)}{\sum_{i=1}^{N}H(\bm{\mathrm{x}}_i|d)} \leq \beta^{L+1}.
\end{equation}
\end{proof}

From Assumption 1 and (\ref{4}), the following theorem about CP-model is a direct result of Theorem 4.

 \begin{theorem}
Information scaling law of CP-model:
\begin{equation}
H(X) - \sum_{i=1}^{N}H(\bm{\mathrm{x}}_i|d) \leq (N + 1)C,  \label{19}
\end{equation}
\begin{equation}
\frac{H(X)}{\sum_{i=1}^{N}H(\bm{\mathrm{x}}_i|d)} \leq \beta^2.
\end{equation}
 \end{theorem}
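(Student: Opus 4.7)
The plan is to decompose the CP-model into the three elementary operations exhibited in (\ref{1}), (\ref{2}) and (\ref{3}) and to apply Assumption 1 together with Definition 4 to each in turn. The structure mirrors the argument for the HT-model in Theorem 4 but with only two mapping-of-probability-space stages instead of $L+1$, which is what shrinks the final constants from $(2^L+\cdots+1)C$ and $\beta^{L+1}$ down to $(N+1)C$ and $\beta^2$.

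First I would treat the inner component layer (\ref{3}), namely $\mathbb{P}(\bm{\mathrm{x}}_i|z)=\sum_{d_i}a_{d_i}^{z,i}\mathbb{P}(\bm{\mathrm{x}}_i|d_i;\theta_{d_i})$, as $N$ parallel mappings of probability space in the sense of Theorem \ref{100}. For each $i\in\{1,\dots,N\}$, Assumption 1 gives $H(\bm{\mathrm{x}}_i|z)-H(\bm{\mathrm{x}}_i|d)\leq C$ and $H(\bm{\mathrm{x}}_i|z)/H(\bm{\mathrm{x}}_i|d)\leq \beta$. Next I would apply Definition 4 to the continued product (\ref{2}) to obtain $H(X|z)=\sum_{i=1}^{N} H(\bm{\mathrm{x}}_i|z)$, and then aggregate the $N$ per-coordinate bounds: summing additively yields $H(X|z)-\sum_{i} H(\bm{\mathrm{x}}_i|d)\leq NC$, while termwise multiplication followed by summation yields $H(X|z)/\sum_{i} H(\bm{\mathrm{x}}_i|d)\leq \beta$, since a common factor $\beta$ can be pulled out of the numerator.

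Finally I would treat the outer mixture (\ref{1}), $\mathbb{P}(X)=\sum_{z} a_{z}\mathbb{P}(X|z)$, as one more mapping of probability space; a single application of Assumption 1 delivers $H(X)-H(X|z)\leq C$ and $H(X)/H(X|z)\leq \beta$. Adding (respectively multiplying) these onto the aggregated bounds of the previous step produces $(N+1)C$ and $\beta\cdot\beta=\beta^2$, which are exactly the two inequalities claimed.

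The only delicate point, and the place where a careless bookkeeping would go wrong, is the non-compounding of the ratio constant across the $N$ parallel inner mappings: a naive argument would inflate $\beta$ into $\beta^{N+1}$, whereas the correct observation is that termwise ratio bounds on the individual summands survive summation with the same ratio, so only the two \emph{serial} stages (inner per-coordinate mixture and outer component mixture) each contribute a factor of $\beta$. Everything else is a direct reuse of the per-fusion analysis already carried out in the proof of Theorem 4.
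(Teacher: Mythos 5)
Your proposal is correct and follows exactly the route the paper intends: the paper gives no explicit proof, merely remarking that the result is ``a direct result of Theorem 4'' via Assumption 1 and the decomposition (\ref{4}) into (\ref{1})(\ref{2})(\ref{3}), and your argument fills in precisely those steps, including the key bookkeeping point that the $N$ parallel inner mixtures contribute $NC$ additively but only a single factor of $\beta$ to the ratio, so that the two serial stages give $(N+1)C$ and $\beta^2$.
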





\section{Conclusion}
In this paper, we have revealed an information scaling law of DNNs. At first, we convert the complex DNNs to rigorous mathematics by exploiting the ConvACs. It is convenient for us to explore the inner organization of DNNs with proper mathematical presentation. Therefore, the information scaling law allows us to understand the complicated DNNs more concretely from the perspective of information theory and provides us designing suggestions of DNNs.

\clearpage
\bibliographystyle{named}
\bibliography{TMM_Information_qingyun}
\end{document}